\newcommand{\avg}{\tfrac{1}{N}\sum_i}
\newtheorem{theorem}[]{Theorem}
\newtheorem{obs}[]{Observation}
\newtheorem{lemma}[]{Lemma}
\definecolor{CommentCyan}{rgb}{0.0,0.7,0.7}
\icmltitlerunning{A Deep Neural Network's Loss Surface Contains Every Low-dimensional Pattern}
\begin{document}

\twocolumn[
\icmltitle{A Deep Neural Network's Loss Surface\\Contains Every Low-dimensional Pattern}




\begin{icmlauthorlist}
\icmlauthor{Wojciech Marian Czarnecki}{dm}
\icmlauthor{Simon Osindero}{dm}
\icmlauthor{Razvan Pascanu}{dm}
\icmlauthor{Max Jaderberg}{dm}
\end{icmlauthorlist}

\icmlaffiliation{dm}{DeepMind}
\icmlcorrespondingauthor{Wojciech Marian Czarnecki}{lejlot@google.com}

\icmlkeywords{Optimisation, Loss surface, Geometry, Universal approximation}

\vskip 0.3in
]



\printAffiliationsAndNotice{} 
\begin{abstract}
The work ``Loss Landscape Sightseeing with Multi-Point Optimization''~\citep{skorokhodov2019loss} demonstrated that one can empirically find arbitrary 2D binary patterns inside loss surfaces of popular neural networks. In this paper we prove that: (i) this is a general property of deep universal approximators; and (ii) this property holds for arbitrary smooth patterns, for other dimensionalities, for every dataset, and any neural network that is sufficiently deep and wide.
Our analysis predicts not only the existence of all such low-dimensional patterns, but also two other properties that were observed empirically: (i) that it is easy to find these patterns; and (ii) that they transfer to other data-sets (e.g. a test-set).
\end{abstract}


\begin{figure*}[h]
    \centering
    \includegraphics[width=0.8\textwidth]{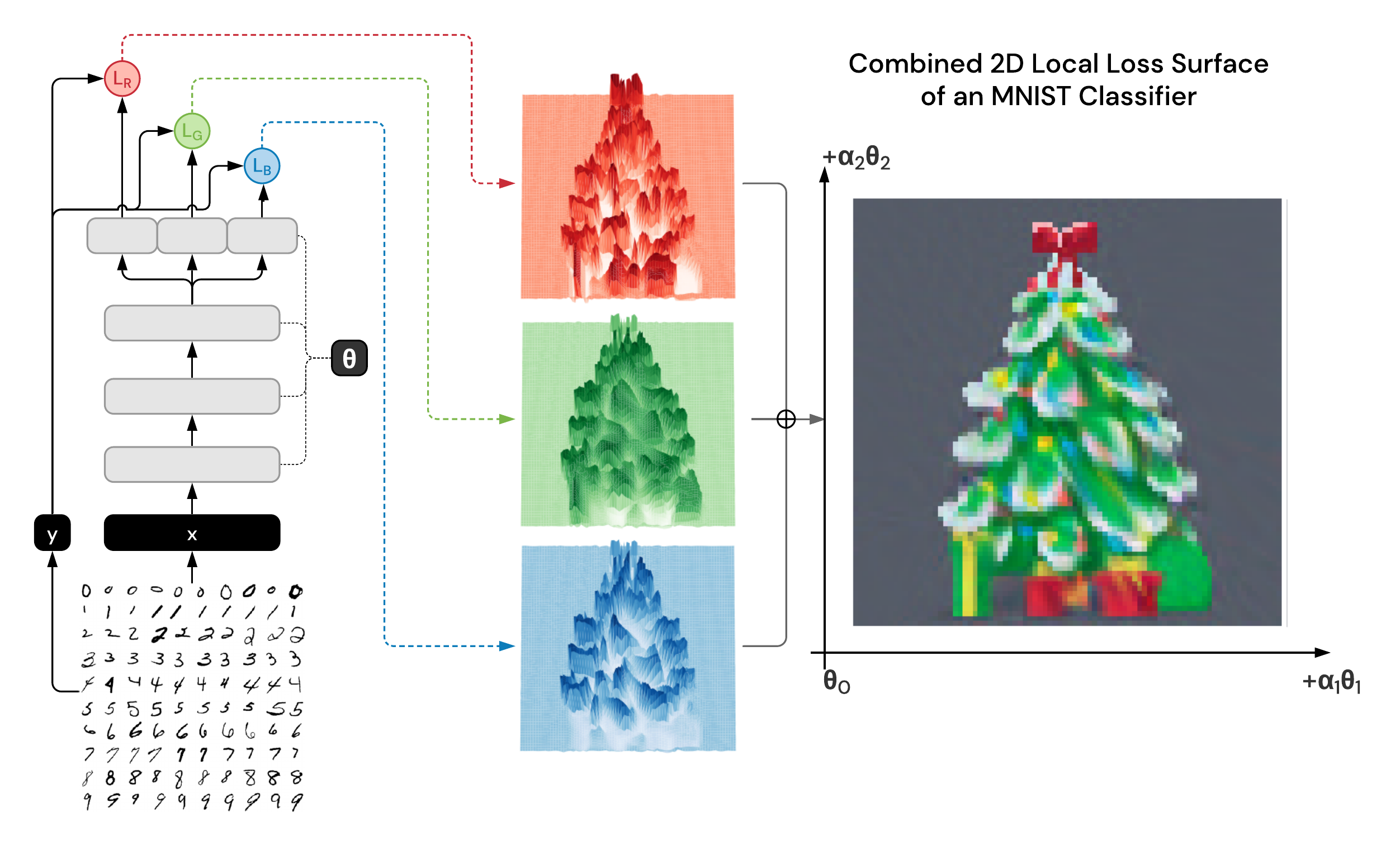}
    \caption{A local loss surface found in an MNIST classification task
    using a three hidden layer MLP, with three heads, each minimising the same classification loss, and visualised in three separate colour channels. For visual clarity, we squash the values through sigmoid before combining them in the final plot. $\theta_0$, $\theta_1$ and $\theta_2$ are three sets of parameters, found from the optimisation procedure using Theorem 1 ($\theta_0$ is the weight vector constructed with Eq.~\ref{eq:t0}, and $\theta_1$ and $\theta_2$ according to Eq.~\ref{eq:ti}). $\alpha_1$ and $\alpha_2$ are scalars, varied between 0 and 1 on the plot.}
    \label{fig:mnist_is_a_tree}
\end{figure*}
\section{Introduction}

Understanding the underlying geometry of the loss surfaces of neural networks is an active field of research~\citep{choromanska2015open,li2018visualizing,fort2019large}. Recently, \citet{skorokhodov2019loss} noticed an interesting phenomenon -- they were able to find arbitrary 2D binary shapes in loss landscapes using their proposed optimisation scheme. 
They also observed that this geometry transferred from training to test-set, and that it was relatively easy to find with first order optimisation.
These exciting empirical results open up theoretical questions about the nature of these observations, and such questions form the focus of this paper.
%
We provide a simple and constructive theoretical argument showing that every low-dimensional pattern can be found in the loss surface of any large enough network, and with every dataset -- see Figure~\ref{fig:mnist_is_a_tree} for an example. 
Furthermore, our theoretical results predict transfer to the test-set, as well as optimisation complexity that is no harder than typical supervised training of deep neural networks.
Finally, we show that these patterns can also be found around approximate global minima.

\section{Losses as universal approximators}


We will show that we can always find a $z$-dimensional section through the loss surface of a big enough neural network where the empirical loss of our network resembles a target function $\mathcal{T}$ 
up to some arbitrary constant bias.
In other words, we seek to find a set of parameters $\boldsymbol{\theta}_0$ and $z$ directions in the parameters space $\boldsymbol{\theta}_i$, such that the training loss of a network in each point of the hypercube $\{\boldsymbol{\theta_0} + \sum_i \alpha_i \boldsymbol{\theta}_i : \alpha_i \in [0, 1]\}$ is approximately equal to a predefined $\mathcal{T}$.
This high level idea is expressed in Figure~\ref{fig:overall}.

We begin with a simple lemma that allows us to add a rich class of activation functions at the top of neural network, without affecting its power as a function-approximator.
\begin{lemma}
If a parametrised family of functions $\mathcal{F} := \{f_{\boldsymbol{\theta}} : [0,1]^d \rightarrow [0,1] : {\boldsymbol{\theta}} \in {\boldsymbol{\Theta}}\}$ defines universal approximators of continuous functions over the same domain, then a transformed family $\sigma(\mathcal{F}) := \{\sigma(f) : f \in \mathcal{F}\}$   defines universal approximators over continuous functions from $[0,1]^d \rightarrow [a,b]$ if $\sigma: [0,1] \rightarrow [a,b]$ is Lipschitz and surjective.
\end{lemma}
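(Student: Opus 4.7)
The strategy is to reduce the claim to the existence of an approximate continuous right-inverse of $\sigma$ and then finish via a standard universal-approximation-plus-Lipschitz argument.

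Fix a continuous $g:[0,1]^d\to[a,b]$ and $\epsilon>0$, and let $L$ be a Lipschitz constant for $\sigma$. The plan is first to construct, for every $\eta>0$, a continuous map $r_\eta:[a,b]\to[0,1]$ with $\|\sigma\circ r_\eta-\mathrm{id}_{[a,b]}\|_\infty\le\eta$. Granting this, $h:=r_\eta\circ g$ is a continuous function $[0,1]^d\to[0,1]$ satisfying $\|\sigma\circ h-g\|_\infty\le\eta$. The universal-approximation hypothesis on $\mathcal{F}$ then furnishes some $f\in\mathcal{F}$ with $\|f-h\|_\infty<\eta/L$; Lipschitzness of $\sigma$ gives $\|\sigma\circ f-\sigma\circ h\|_\infty\le L\|f-h\|_\infty<\eta$; and the triangle inequality yields $\|\sigma\circ f-g\|_\infty<2\eta$. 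Choosing $\eta=\epsilon/2$ closes the argument.

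To produce $r_\eta$, I would fix a grid $a=y_0<y_1<\cdots<y_N=b$ on $[a,b]$ of mesh at most $\eta/2$. For each consecutive pair of grid values, an intermediate-value argument yields a closed interval $[p_i,q_i]\subseteq[0,1]$ with $\sigma(p_i)=y_i$, $\sigma(q_i)=y_{i+1}$, and $\sigma([p_i,q_i])\subseteq[y_i,y_{i+1}]$: starting from any preimages of $y_i$ and $y_{i+1}$, one extracts from the interval between them a minimal sub-interval on which $\sigma$ stays in $[y_i,y_{i+1}]$ via ``last exit'' and ``first entry'' times, whose existence is guaranteed by continuity of $\sigma$ and surjectivity onto $[a,b]$. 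A candidate $r_\eta$ is then obtained by reparametrising these sub-intervals consecutively by $[y_i,y_{i+1}]$, producing $\sigma\circ r_\eta$ within $\eta/2$ of the identity on each segment.

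The main obstacle is chaining these sub-intervals into a single continuous $r_\eta$: the right endpoint $q_i$ of the $i$-th sub-interval (a preimage of $y_{i+1}$) need not coincide with the left endpoint $p_{i+1}$ of the next (another preimage of $y_{i+1}$), so a direct concatenation is generally discontinuous. Resolving this requires either an inductive global selection of preimages that is consistent across adjacent sub-intervals, or the insertion of small transition segments of $r_\eta$ near each $y_{i+1}$. In the latter approach, the Lipschitz property of $\sigma$ is essential: it bounds $|\sigma(r_\eta(y))-y_{i+1}|$ by $L$ times the $[0,1]$-length of the transition, so refining the grid absorbs the additional error into the overall $\eta$-budget. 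With this selection step in hand, the argument outlined above delivers the universal-approximation property of $\sigma(\mathcal{F})$.
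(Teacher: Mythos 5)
Your overall reduction is the right formalization of the paper's (one-sentence) proof: given a continuous $h$ with $\sigma\circ h\approx g$, pick $f\in\mathcal{F}$ with $\|f-h\|_\infty<\eta/L$ and transfer via Lipschitzness and the triangle inequality — that part is correct. The genuine gap is precisely the chaining obstacle you flag, and your proposed fix does not work. During a transition $r_\eta$ must travel from one preimage $q_i$ of $y_{i+1}$ to a different preimage $p_{i+1}$, and the Lipschitz property only yields $|\sigma(r_\eta(y))-y_{i+1}|=|\sigma(r_\eta(y))-\sigma(q_i)|\le L\,|r_\eta(y)-q_i|\le L\,|p_{i+1}-q_i|$. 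The quantity $|p_{i+1}-q_i|$ is a distance between distinct preimages in the domain $[0,1]$; it is fixed by $\sigma$ and does not shrink when you refine the grid on $[a,b]$. Shrinking the $y$-interval on which the transition happens reduces the measure of the bad set, not the sup-norm error, so the ``$\eta$-budget'' cannot absorb it.

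In fact no selection scheme can close this gap under the stated hypotheses, because an approximate continuous right inverse need not exist. Take $[a,b]=[0,1]$ and $\sigma$ piecewise linear through $(0,0)$, $(1/3,0.6)$, $(2/3,0.4)$, $(1,1)$: it is Lipschitz and surjective. If $r:[0,1]\to[0,1]$ is continuous with $\sup_y|\sigma(r(y))-y|\le\eta<0.1$, then $\sigma(r(0))\le\eta$ forces $r(0)<1/3$ and $\sigma(r(1))\ge 1-\eta$ forces $r(1)>2/3$; letting $y_2$ be the first $y$ with $r(y)=2/3$ (so $y_2\le 0.4+\eta$) and, by the intermediate value theorem, $y_1\le y_2$ a point with $r(y_1)=1/3$ (so $y_1\ge 0.6-\eta$), we get $0.6-\eta\le 0.4+\eta$, a contradiction. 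The same computation with $d=1$, $g=\mathrm{id}$ and $r=f$ shows that for any family of \emph{continuous} functions (e.g.\ neural networks) $\sigma(\mathcal{F})$ cannot approximate $g$ uniformly better than $0.1$, i.e.\ the lemma as literally stated is false for general Lipschitz surjective $\sigma$ — the paper's own one-line proof, which appeals only to Lipschitzness and surjectivity, glosses over exactly this point. Your step 1 does go through, with an exact rather than approximate section and no chaining at all, whenever $\sigma$ admits a continuous right inverse — in particular when $\sigma$ is monotone, or convex/concave so that a single monotone branch already maps onto $[a,b]$, which covers the loss-induced activations ($\sigma_{L_2}$, cross-entropy) the paper actually uses; alternatively one could weaken the conclusion to the integral (mean-square) approximation in which Theorem 1 states its guarantee, though that needs its own argument for confining transitions to small-measure sets. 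With such an added hypothesis your argument is complete, and more rigorous than the paper's.
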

\begin{proof}
Comes directly from the Lipschitz property, which bounds changes in the output by changes in the input, and from the fact that surjectivity means one can represent every value in $[a,b]$.
\end{proof}

Equipped with this lemma, we can prove the main result of this paper.
\begin{figure*}[htb]
    \centering
    \includegraphics[width=\textwidth]{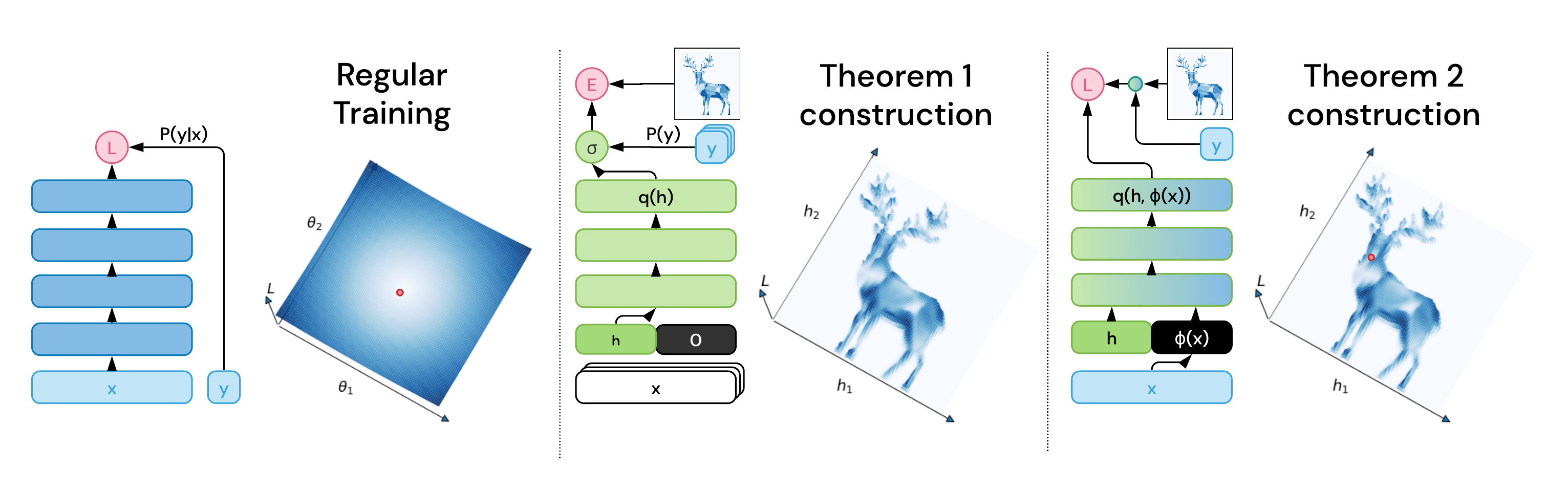}
    \caption{Visualisation of the construction from each theorem. Blue blocks correspond to parts of the network trained to solve the underlying task (mapping from $x$ to $y$). Green blocks correspond to parts of the network used to approximate the target loss surface. Pink blocks are the losses one minimises. Black blocks represent fixed values, no longer trained. A white block represents an entity that is no longer affecting the model. A red dot is a global minimum.}
    \label{fig:overall}
\end{figure*}


\begin{theorem}
Every low-dimensional pattern can be found in a loss surface of sufficiently large\footnote{Having enough layers and/or hidden units in each.} deep neural network.
\end{theorem}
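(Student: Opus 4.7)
The plan is to exhibit an explicit choice of $\boldsymbol{\theta}_0,\boldsymbol{\theta}_1,\ldots,\boldsymbol{\theta}_z$ by splitting a sufficiently wide and deep network into two parallel, non-interacting sub-networks: a \emph{task branch} that computes a fixed function $f(x)$ of the input $x$, and a \emph{pattern branch} whose output is independent of $x$ but is controlled by $z$ distinguished bias parameters $b_1,\ldots,b_z$. The directions $\boldsymbol{\theta}_i$ will each perturb exactly one of these biases, so that at $\boldsymbol{\theta}_0 + \sum_i \alpha_i \boldsymbol{\theta}_i$ the pattern branch effectively sees $(\alpha_1,\ldots,\alpha_z)\in[0,1]^z$ as its inputs. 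Deep universal approximation applied to this branch, together with the preceding Lemma applied to its output activation, will then let us sculpt its scalar output as essentially any continuous function of $\alpha$, and hence sculpt the empirical loss in the prescribed way.

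More concretely, I would take the architecture large enough that we can realise an output of the form $N(x;\boldsymbol{\theta}) = f(x;\boldsymbol{\theta}^f) + g(b_1,\ldots,b_z;\boldsymbol{\theta}^g)$, with the two branches decoupled by zeroing the relevant cross-weights. I would set $\boldsymbol{\theta}_0$ so that all the $b_i$ vanish (and $f$ is arbitrary, or optionally a trained task solver), and let $\boldsymbol{\theta}_i$ be the coordinate vector that adds $1$ to $b_i$ and is zero elsewhere. The empirical loss along the hypercube then collapses to
\[
L(\alpha) \;=\; \tfrac{1}{N}\sum_j \ell\bigl(f(x_j)+g(\alpha),\, y_j\bigr) \;=\; \Psi\bigl(g(\alpha)\bigr),
\]
where $\Psi(v):=\tfrac{1}{N}\sum_j \ell(f(x_j)+v, y_j)$ depends only on the fixed dataset and on $f$, not on $\alpha$. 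Picking an additive constant $C$ and a branch on which $\Psi$ is continuous and monotonic, I would then set the target for $g$ to be $h(\alpha):=\Psi^{-1}(\mathcal{T}(\alpha)+C)$, invoke the preceding Lemma to wrap the branch's output in a Lipschitz surjection onto the needed interval, and use a standard deep universal-approximation theorem to realise $h$ within arbitrarily small uniform error. Continuity of $\Psi$ then propagates this to uniform error on $L$.

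The step I expect to be the main obstacle is the range-matching and invertibility of $\Psi$: for natural losses (squared error, cross-entropy, etc.), $\Psi$ is continuous and strictly monotonic on one side of its minimum but not globally, so one must restrict to a one-sided branch and pick $C$ large enough that $\mathcal{T}([0,1]^z)+C$ lies inside $\Psi$'s image there. This is exactly what the Lemma was set up to handle: a Lipschitz surjection at the top of the pattern branch absorbs the range-matching cleanly, while the interior universal-approximation argument stays unchanged. The remaining work is mostly bookkeeping: that two branches can coexist in a wide enough deep network with the required zero cross-weights, that the $\boldsymbol{\theta}_i$ really implement pure bias shifts by $\alpha_i$, and that continuity of $\ell$ carries the approximation error through $\Psi$. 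Importantly, this argument never uses any property of the dataset beyond continuity of the loss, which will underpin the companion "every dataset" and test-set-transfer predictions.
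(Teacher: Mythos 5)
Your proposal is correct, and at its core it is the paper's own construction: carve out $z$ first-layer biases as surrogate inputs decoupled from $\mathbf{x}$, view the dataset-averaged loss $\Psi(v)=\tfrac{1}{N}\sum_j \ell(\cdot,\mathbf{y}_j)$ as a fixed implicit ``final activation'' on top of the network output, and combine deep universal approximation with the Lipschitz/surjectivity Lemma to match $\mathcal{T}$ up to an additive constant. The differences are mild. First, the paper's Theorem~1 simply sets the entire first-layer weight matrix to zero, i.e.\ it takes your task branch to be $f\equiv 0$; your additive two-branch output $f(x)+g(\alpha)$ generalises this, and is in spirit the move the paper defers to Theorem~2 (where keeping input-dependence is what allows a near-global minimum inside the pattern). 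For Theorem~1 the extra branch buys nothing and costs a little: $\Psi$ then depends on the data through $f(\mathbf{x}_j)$, not only through $P(\mathbf{y})$, so the paper's Observation~1 (train-to-test transfer) would require $f\equiv 0$ or a separate argument, and the two-branch decoupling is bookkeeping the paper avoids entirely. Second, you approximate the pre-image target $\Psi^{-1}(\mathcal{T}(\alpha)+C)$ on a one-sided monotone branch and propagate the error through continuity of $\Psi$, whereas the paper applies Lemma~1 to conclude that the composition $\sigma\circ q$ is itself a universal approximator; these are equivalent, and your inverse formulation is exactly the one the paper itself adopts in Observation~2 and Theorem~2. Your range-matching via the constant $C$ and a monotone branch is precisely the paper's condition $\mathrm{Im}(\sigma)+a \supseteq \mathrm{Im}(\mathcal{T})$, so there is no gap there.
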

\begin{proof}
Let us begin by establishing our notation. We assume that the first layer of our network is parameterised by ${\boldsymbol{\theta}}_\mathrm{W}$ (weights) and ${\boldsymbol{\theta}}_\mathrm{b}$ (biases). The parameters of the remaining layers will be jointly referred to as ${\boldsymbol{\theta}}'$. The combination of all the parameters is ${\boldsymbol{\theta}}=\{{\boldsymbol{\theta}}_\mathrm{W}, {\boldsymbol{\theta}}_\mathrm{b}, {\boldsymbol{\theta}}'\}$. We further assume that ${\boldsymbol{\theta}}_\mathrm{W} \in \mathbb{R}^{d \times k}$ where $d$ is the dimensionality of the input, and that the number of hidden units in first layer, $k$, is such that $k \geq z$.
\noindent The output of our network is 
\begin{equation}
f_{\boldsymbol{\theta}}(\mathbf{x}) =f_{\{{\boldsymbol{\theta}}_\mathrm{W}, {\boldsymbol{\theta}}_\mathrm{b}, {\boldsymbol{\theta}}'\}}(\mathbf{x}) = g_{{\boldsymbol{\theta}}'}(\langle \mathbf{x}, {\boldsymbol{\theta}}_\mathrm{W} \rangle + {\boldsymbol{\theta}}_\mathrm{b} ),
\end{equation}
and the empirical loss summed over $N$ input vectors $\mathbf{x}_i$ and their corresponding targets $\mathbf{y}_i$ is
\begin{equation}
L({\boldsymbol{\theta}}) = \avg \ell(f_{\boldsymbol{\theta}}(\mathbf{x}_i), \mathbf{y}_i),
\end{equation}
for some $\ell: \mathbb{R}^l \times \mathbb{R}^l \rightarrow \mathbb{R}$ (e.g. cross entropy or Euclidean distance).
Without loss of generality we will focus on the univariate case of $l=1$, but exactly the same reasoning can be applied for any $l \in \mathbb{N}$ (see Figure~\ref{fig:mnist_is_a_tree} for an example with $l=3$).
Our target (bounded, smooth) z-dimensional loss pattern is
\begin{equation}
\mathcal{T}(h_1, ..., h_z) : [0,1]^z \rightarrow [0,1].
\end{equation}
%
%
Let us set ${\boldsymbol{\theta}}_\mathrm{W} = \mathbf{0}$ and ${\boldsymbol{\theta}}_\mathrm{b} = [h_1, ..., h_z, 0, ..., 0]$.
We can then define a new neural network:
\begin{equation}
\begin{aligned}
q_{{\boldsymbol{\theta}}'}(h_1, ..., h_z) := &f_{\{\mathbf{0},[h_1, ..., h_z, 0, ..., 0],{\boldsymbol{\theta}}'\}}(\mathbf{x})\\
= &g_{{\boldsymbol{\theta}}'}(\langle \mathbf{x}, \mathbf{0}\rangle + [h_1, ..., h_z, 0, ..., 0])\\
= &g_{{\boldsymbol{\theta}}'}([h_1, ..., h_z, 0, ..., 0]),
\end{aligned}
\end{equation}
and with such parameterisation, the output of $f$ is independent of its inputs.
Note that according to the universal approximation theorem~\citep{hornik1991approximation}, $q$ is still a universal approximator of $\mathcal{C}^1$ functions from $[0,1]^z$ to $[0,1]$. 
The only thing left to show is that our loss is also a universal approximator. 
However, recall that we have:
\begin{equation}
    \begin{aligned}
&L(\{\mathbf{0},[h_1, ..., h_z, 0, ..., 0],{\boldsymbol{\theta}}'\})\\ &= \avg \ell(f_{\{\mathbf{0},[h_1, ..., h_z, 0, ..., 0],{\boldsymbol{\theta}}'\}}(\mathbf{x}_i), \mathbf{y}_i)\\ &= \avg \ell(q_{{\boldsymbol{\theta}}'}(h_1,..., h_z), \mathbf{y}_i),    
    \end{aligned}
\end{equation}

\noindent and $q$ is a universal approximator.
Therefore we see, from Lemma 1, that for any $\mathcal{T}$ we can find parameters
${\boldsymbol{\theta}}_\epsilon(\mathcal{T})$ to replace 
%
${\boldsymbol{\theta}}'$,
such that the loss surface given by $L$
becomes arbirarily close to $\mathcal{T}$.
In order for Lemma 1 to be valid, we simply require that the function $\sigma(p) := \avg \ell(p, \mathbf{y}_i)$, representing the loss on top of the neural network,  is (up to a constant) surjective in the set of target values, meaning that using notation of $\mathrm{Im}(f) = \{y: \exists \mathbf{x}\; f(\mathbf{x})=y\}$
$$
\exists_{a \in \mathbb{R} }\;\;\mathrm{Im}(\sigma) + a \supseteq \mathrm{Im}(\mathcal{T}),
$$
and that it is locally Lipschitz (so that small changes in $p$ correspond to small changes in $L$). These properties are satisfied by any reasonable combinations of neural network loss function and target pattern $\mathcal{T}$. For example let us choose a quadratic form of loss $\ell(p,y) = \ell_2(p,y) = \|p-y\|^2$, then\footnote{To simplify notation, we assume the output of the network is 1-dimensional. The multi-dimensional case is identical, with the loss summing over the output dimensions.}
\begin{equation}
\begin{aligned}
\sigma_\mathrm{{L_2}}(p) & = \avg \ell_2(p,y_i) = \avg (p - y_i)^2 \\
& = p^2 - \left (\avg 2y_i\right ) p + \avg y_i^2,
\end{aligned}
\end{equation}
is still a quadratic with parameters that are the only quantities in the entire construction that now depend on the training dataset. Thus, up to a constant offset, we have
$
\mathrm{Im}(\sigma_\mathrm{{L_2}}) \supseteq [0,1] = \mathrm{Im}(\mathcal{T})
$.
Furthermore, quadratic functions are locally Lipschitz.
Thus, one can think about this function as a ``final activation'' of our regressor; just one that happens to be parameterised with some arbitrary constants. 
Moving beyond quadratic losses, we can show analogous properties for cross entropy and other standard losses (e.g. Figure~\ref{fig:implicit_act}).
\end{proof}

The above theorem can easily be adapted to convolutional neural networks. The construction is identical, with the requirement that the number of feature maps on the first layer is larger than $z$. We can remove the dependence on the input in the same way, and use the bias of the first convolutional layer to derive the $z$-dimensional subspace. As relying on convolutional layers does not affect the universal approximator properties of the model, everything else stays analogous. 

As a result of Theorem 1 we know that, with a large enough network, the target loss surface can be found at 
\begin{equation}
{\boldsymbol{\theta}}_\epsilon^*(\mathbf{h}):= \{\mathbf{0},[h_1, ..., h_z, 0, ..., 0],{\boldsymbol{\theta}}_\epsilon(\mathcal{T})\},
\end{equation}
for $\mathbf{h} \in [0,1]^z$. 
This means that for each $\epsilon > 0$ we have 
\begin{equation}
 \min_{a\in \mathbb{R}} \int_{[0,1]^z} \| \mathcal{T}(\mathbf{h}) - L({\boldsymbol{\theta}}_\epsilon^*(\mathbf{h})) + a\|^2
\mathrm{d}\mathbf{h} < \epsilon.
\end{equation}
Equivalently, we can say that we found the initial position
 \begin{equation}
 \label{eq:t0}
 \boldsymbol{\theta}_0 =  \{\mathbf{0},[0, ..., 0, 0, ..., 0],{\boldsymbol{\theta}}_\epsilon(\mathcal{T})\},\end{equation}
and directions 
 \begin{equation}
 \label{eq:ti}
 \boldsymbol{\theta}_i = \{\mathbf{0},[0, ..., \underset{i}{1}, 0, ..., 0],\mathbf{0}\},
 \end{equation}
so that
\begin{equation}
L(\boldsymbol{\theta}_0 + \sum_i \alpha_i \boldsymbol{\theta}_i) \approx \mathcal{T}(\alpha_1, ..., \alpha_z),
\end{equation}
for $\alpha_i \in [0,1]$ (see Figure~\ref{fig:mnist_is_a_tree} for an illustration). This transformation provides the equation for parameters of the original network $f_{\boldsymbol{\theta}}$ where the pattern can be found.

Interestingly the loss surface constructed in this way is axis aligned, but an analogous construction could be done for any other hidden layer, for other units of the same layer, for arbitrary combinations of units, and so on.

\begin{figure}[htb]
    \centering
    \includegraphics[width=0.24\textwidth]{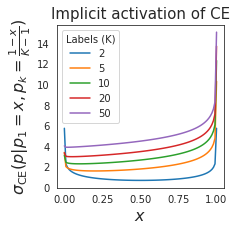}
    \includegraphics[width=0.224\textwidth]{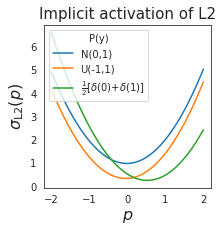}
    \caption{Visualisations of implicit activations functions $\sigma$ for cross entropy loss (left) and squared loss (right).}
    \label{fig:implicit_act}
\end{figure}

Summarising thus far -- we have shown that the loss surface of a neural network is a universal approximator of low-dimensional functions wrt. some of its own parameters, provided that: (i) the network has one linear layer acting as an information bottleneck
(e.g. any MLP); and that (ii) the sub-network downstream of this layer is itself still a universal approximator\footnote{In fact, this need not be a sub-network -- any other universal approximator would suffice, e.g. a Gaussian Process.}; and finally that (iii) the loss function is non-degenerate (which allows us to represent all values of interest, after marginalising over training set labels).
This implies that whatever low-dimensional geometry one chooses, we will be able to find a corresponding low-dimensional subspace of parameter space that will resemble this geometry to arbitrarily low error (where the size of the error is a function of size of the model).

\begin{obs} 
Patterns obtained using construction from Theorem 1 transfer to any samples with the same $P(\mathbf{y})$ (e.g. typical test sets of datasets).
\end{obs}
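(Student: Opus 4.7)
The plan is to exploit the crucial structural feature of the Theorem~1 construction: with ${\boldsymbol{\theta}}_\mathrm{W}=\mathbf{0}$, the pre-activation $\langle \mathbf{x}, {\boldsymbol{\theta}}_\mathrm{W}\rangle + {\boldsymbol{\theta}}_\mathrm{b}$ does not depend on the input $\mathbf{x}$, so the network's output on every training example is the same value $q_{{\boldsymbol{\theta}}'}(\mathbf{h})$. The inputs $\mathbf{x}_i$ therefore play no role whatsoever in the constructed loss, which collapses to $L = \avg \ell(q_{{\boldsymbol{\theta}}'}(\mathbf{h}),\mathbf{y}_i)$, a function only of $\mathbf{h}$, ${\boldsymbol{\theta}}'$, and the \emph{labels}.

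First I would rewrite this as an empirical expectation of $\ell(q_{{\boldsymbol{\theta}}'}(\mathbf{h}),\mathbf{y})$ under the empirical label distribution $\hat{P}_N(\mathbf{y})$. Then, by the law of large numbers (or a non-asymptotic Hoeffding-type bound, if one prefers quantitative rates), this quantity converges to $\mathbb{E}_{\mathbf{y}\sim P(\mathbf{y})}[\ell(q_{{\boldsymbol{\theta}}'}(\mathbf{h}),\mathbf{y})]$, which depends only on $P(\mathbf{y})$. Consequently, for any two samples drawn from distributions sharing the same marginal $P(\mathbf{y})$---in particular a training set and a test set for the same underlying task---the two induced loss surfaces both concentrate around the same population surface and therefore approximately coincide.

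To lift this pointwise statement to a statement about the \emph{entire $z$-dimensional pattern}, I would combine pointwise concentration over a finite $\epsilon$-net in the compact cube $[0,1]^z$ with uniform continuity of $\mathbf{h}\mapsto \ell(q_{{\boldsymbol{\theta}}'}(\mathbf{h}),\mathbf{y})$ (which follows from continuity of $q_{{\boldsymbol{\theta}}'}$ on the cube together with the local Lipschitzness of $\ell$ already invoked in Theorem~1) to promote the pointwise control to uniform control across the whole section. The main potential obstacle is stating this uniformisation cleanly without extra moment assumptions, but for standard losses it is essentially immediate: for $\ell_2$, $\sigma_{L_2}(p) = p^2 - (\avg 2 y_i)\, p + \avg y_i^2$ depends on the dataset only through the first two label moments, both of which are preserved up to $O(1/\sqrt{N})$ sampling noise across any two i.i.d.\ samples from $P(\mathbf{y})$; analogous dataset-independence holds for cross entropy and other standard losses under mild integrability. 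Thus the pattern constructed in Theorem~1 is fundamentally a property of $P(\mathbf{y})$ alone, and automatically transfers to any resample sharing that marginal.
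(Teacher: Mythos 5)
Your proposal is correct and rests on exactly the same key observation as the paper: with ${\boldsymbol{\theta}}_\mathrm{W}=\mathbf{0}$ the prediction is input-independent, so the constructed loss surface depends on the data only through the label marginal $P(\mathbf{y})$ (equivalently, through the implicit activation $\sigma(p)=\avg \ell(p,\mathbf{y}_i)$), and hence coincides (approximately) on any sample with the same marginal. The concentration/$\epsilon$-net machinery you add merely makes quantitative what the paper states informally as ``approximately the same.''
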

\begin{proof}
This comes directly from the fact that the prediction, $L({\boldsymbol{\theta}}_\epsilon^*(\mathbf{h}))$, depends on the dataset only through the marginal distribution of the output labels,  $P(\mathbf{y})$, and this in turn affects the definition of the implicit activation function, $\sigma$. Since this marginal distribution, $P(\mathbf{y})$, is (approximately) the same for both the training and test sets, the values for the projected loss surface will also be (approximately) the same. Of course by changing the marginals, $P(\mathbf{y})$, for the test-set one can break this property.
\end{proof}

A consequence of the above observation is that the loss surface patterns will also transfer to many other datasets, as long as the loss function used is the same, and the marginal $P(\mathbf{y})$ matches. For instance, every single perfectly balanced classification dataset with some predefined number of classes (e.g.: MNIST, Fashion MNIST, and CIFAR-10 in the case of 10-classes) will exhibit the same set of patterns (constructed via Theorem 1) for a given network..

\begin{obs}
Finding ${\boldsymbol{\theta}}^*$ is no harder than standard loss optimisation for typical neural networks.
\end{obs}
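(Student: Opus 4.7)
The plan is to exhibit a concrete optimisation procedure whose output is (an approximation to) ${\boldsymbol{\theta}}^*$, and to observe that each step of this procedure is structurally identical to a step of ordinary supervised training. Fixing ${\boldsymbol{\theta}}_\mathrm{W}=\mathbf{0}$ and keeping the trailing bias coordinates at zero as in Theorem~1, the only free parameters are ${\boldsymbol{\theta}}'$. Approximating $\mathcal{T}$ then amounts to minimising
\begin{equation}
\mathcal{L}({\boldsymbol{\theta}}', a) = \int_{[0,1]^z} \bigl(\mathcal{T}(\mathbf{h}) - L(\{\mathbf{0},[h_1,\dots,h_z,0,\dots,0],{\boldsymbol{\theta}}'\}) + a\bigr)^2 \, \mathrm{d}\mathbf{h},
\end{equation}
over ${\boldsymbol{\theta}}'$ and the scalar offset $a$, which can be estimated by Monte Carlo sampling of $\mathbf{h}_j$ uniformly on $[0,1]^z$.

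I would then spell out the resulting gradient-descent loop: on each outer step, sample a mini-batch of $\mathbf{h}_j$; for every $\mathbf{h}_j$, set the first $z$ biases of the first layer to its coordinates, forward-propagate a mini-batch of training examples $\mathbf{x}_i$ through the whole network, compute $L$ exactly as during ordinary training, and back-propagate the scalar residual $\mathcal{T}(\mathbf{h}_j) - L + a$ through ${\boldsymbol{\theta}}'$ via the standard chain rule. Each update therefore costs one forward and one backward pass through the same architecture, with only $O(z)$ overhead for injecting $\mathbf{h}_j$ into the biases, so the per-step cost provably matches that of standard neural-network training.

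The main obstacle is interpreting the qualifier \emph{no harder}. Per-step cost is immediate from the above, but one must also argue that the optimisation landscape itself is no worse. I would reduce this claim by noting that the auxiliary problem is literally ``train a deep universal approximator by first-order methods to regress a smooth scalar target on a low-dimensional input'', which is exactly the regime where first-order optimisation of deep networks is known empirically to succeed. Moreover, since the loss-induced activation $\sigma$ is locally Lipschitz and (up to a constant) surjective onto $\mathrm{Im}(\mathcal{T})$ by the hypotheses of Lemma~1, in common cases it can be inverted analytically (a square root for the $\ell_2$ case above, a logit for cross-entropy; cf.~Figure~\ref{fig:implicit_act}), so one may equivalently regress ${\boldsymbol{\theta}}'$ directly against $\sigma^{-1}(\mathcal{T}(\mathbf{h}))$, reducing the task to a textbook supervised regression on $[0,1]^z$ whose difficulty is, by construction, bounded by that of normal training of the same architecture.
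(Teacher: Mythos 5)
Your proposal is correct and its decisive step coincides with the paper's own argument: because the implicit activation $\sigma$ sits only on top of the network, the problem reduces to regressing $q_{{\boldsymbol{\theta}}'}(\mathbf{h})$ against $\sigma^{-1}(\mathcal{T}(\mathbf{h}))$ (with the non-injectivity of $\sigma$ handled by choosing a branch of the pre-image), which is a textbook supervised regression over $[0,1]^z$. The additional material on per-step cost and Monte Carlo sampling of $\mathbf{h}$ is a harmless elaboration rather than a different route.
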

\begin{proof}
Using the construction from Theorem 1 leads to recasting the problem as supervised learning with a slightly non-standard final activation function. However, since this activation function is not used within the hidden layers of the model, it will not affect the learning process beyond, effectively, relabelling the output targets. The optimisation problem ends up being equivalent to
\begin{equation}
\underset{\theta \in \Theta}{\arg\min}\;\mathbb{E}_{\mathbf{h} \sim \mathrm{U}([0,1]^z)} \| q_\theta(\mathbf{h}) - \sigma^{-1}(\mathcal{T}(\mathbf{h})) \|^2,
\end{equation}
where $\sigma^{-1}$ can be defined as a maximum element in the pre-image of $\sigma$ to make the mapping unique (since $\sigma$ is often only locally injective). This is a standard objective of supervised learning, and thus the problem is equally hard.
\end{proof}

A natural question to ask is whether we can break the loss-surface-fitting solution's independence from $\mathbf{x}$ in a way that allows us to also still have solutions of the original task $L$ somewhere in our section. Quite surprisingly, we can now extend our construction to this case as well.

\begin{figure*}[htb]
    \centering
    \includegraphics[width=\textwidth]{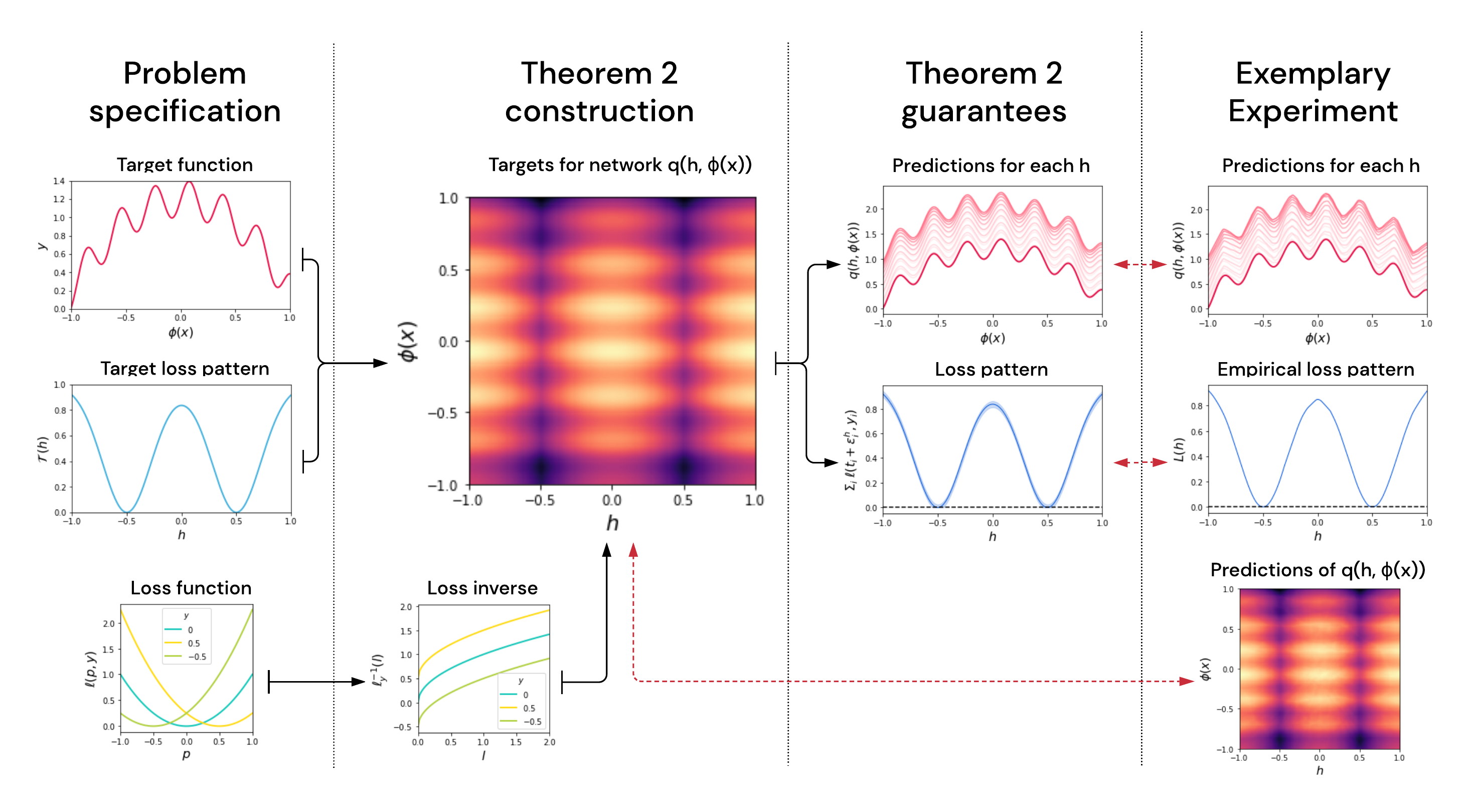}
    \caption{Visualisation of the construction from Theorem 2, on a simple example of 1D regression from $\phi(x)$ to $-\phi(x)^2 + \tfrac{\sin(20\phi(x))}{5} + 1.2$ and 1D target loss pattern $\mathcal{T}(h) = 1-(\exp(-\tfrac{(h-0.5)^2}{0.1}) + \exp(-\tfrac{(h+0.5)^2}{0.1}))$. For simplicity of illustration, we present the target function for this toy example as being based on $\phi(x)$, rather than $x$ itself -- due to the assumed injectivity, this incurs no loss of generality. We use the quadratic loss $\ell(p,y) = (p-y)^2$. On the right hand side one sees that effectively, our construction forces the network to build a distribution over predictions, each being slightly shifted, so that after transforming through the loss calculation -- they correspond to changes in the target loss pattern. There are two minima, $h^*=\pm 0.5$, that are realised in the resulting model. We provide an empirical result too, with an MLP trained with the construction from Theorem 2, that shows both replication of predictions, as well as a pattern, with the correct placement of minima.}
    \label{fig:thm2}
\end{figure*}
\begin{theorem}
Every low-dimensional pattern can be found in a loss surface of a sufficiently deep neural network, and within the pattern there exists a point that leads to a loss that is within epsilon\footnote{The value of epsilon depends on the size of the network, and can be made arbitrarily small as the network size grows.} of the global minimum.
\end{theorem}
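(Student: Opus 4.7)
The plan is to extend Theorem~1's construction so that the network's output can still depend on $\mathbf{x}$, and so that at one distinguished point in pattern space the network behaves as a near-optimal predictor. Following the intuition illustrated in Figure~\ref{fig:thm2}, I will have the sub-network $g_{\boldsymbol{\theta}'}$ approximate a target joint function of $(\mathbf{x},\mathbf{h})$ of the additive form $g^*(\mathbf{x},\mathbf{h}) = \phi^*(\mathbf{x}) + s(\mathbf{h})$, where $\phi^*$ is a fixed, near-optimal predictor for the underlying task and $s$ is a scalar shift chosen so that, once $\ell(\phi^*(\mathbf{x}_i)+s(\mathbf{h}),\mathbf{y}_i)$ is averaged across the training set, the resulting scalar-valued function of $\mathbf{h}$ matches $\mathcal{T}$ up to an additive constant.

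Concretely: (i) translate $\mathcal{T}$ so that $\min_{\mathbf{h}} \mathcal{T}(\mathbf{h}) = 0$ at some $\mathbf{h}^*$; since the framework already tolerates an arbitrary additive offset, this costs nothing. (ii) Pick $\phi^*$, representable by the downstream architecture via the standard universal approximation theorem, whose empirical loss $L^*$ is within $\epsilon/2$ of the global minimum; adjust by a constant if necessary so that the average residual vanishes. (iii) Modify the first-layer construction from Theorem~1 so that only $z$ of the first-layer units have zero input weights (carrying $h_1,\ldots,h_z$ as biases), while the remaining units retain nonzero weights and forward $\mathbf{x}$; the downstream sub-network is then a universal approximator in both $\mathbf{x}$ and $\mathbf{h}$. (iv) Solve for $s$ by inverting the implicit activation $\eta(s) := \tfrac{1}{N}\sum_i \ell(\phi^*(\mathbf{x}_i)+s, \mathbf{y}_i)$; for the squared loss the vanishing-residual choice makes $\eta(s) = L^* + s^2$ exactly, yielding the closed form $s(\mathbf{h}) = \sqrt{\mathcal{T}(\mathbf{h})}$ with $s(\mathbf{h}^*) = 0$. (v) Apply the universal approximation theorem once more to choose ${\boldsymbol{\theta}}'$ realising $g^*$ to uniform accuracy $\epsilon$, so that at $\mathbf{h} = \mathbf{h}^*$ the shift vanishes and the empirical loss is within $\epsilon$ of the global minimum, while elsewhere in the hypercube the loss tracks $L^* + \mathcal{T}(\mathbf{h})$.

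The main obstacle is step (iv): for the pattern to be faithful across all of $[0,1]^z$, the implicit activation $\eta$ built from $\phi^*$ must be, up to a constant, surjective onto the range of $\mathcal{T}$ and locally Lipschitz---exactly the non-degeneracy condition Lemma~1 already demanded of $\sigma$ in Theorem~1. For the squared loss this is immediate since $s^2$ already covers $[0,\infty)$; for cross-entropy an analogous computation in the spirit of Figure~\ref{fig:implicit_act} works, provided one picks the branch of $s$ that vanishes at the optimum. Once $s$ is fixed, the approximation error is inherited from the universal approximator used to build $g^*$, and the remainder of the argument mirrors Theorem~1 verbatim, with the constant downstream output $q_{\boldsymbol{\theta}'}(\mathbf{h})$ replaced by the joint function $\phi^*(\mathbf{x})+s(\mathbf{h})$.
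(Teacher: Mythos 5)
Your construction is correct (at the paper's own level of rigor) and shares the paper's skeleton -- zero the first $z$ input columns so that $h_1,\dots,h_z$ enter through first-layer biases, keep the remaining units to carry an embedding of $\mathbf{x}$, and have the downstream universal approximator fit targets designed so that the empirical loss traces $\mathcal{T}$, with Lipschitzness of $\ell$ converting uniform target-approximation error into a small error on the pattern -- but your key step differs genuinely from the paper's. The paper inverts the loss \emph{per sample}: it sets the targets to $\ell_i^{-1}(\mathcal{T}(\mathbf{h}))$, so each individual sample's loss equals $\mathcal{T}(\mathbf{h})$ and the average is the pattern for free, for any invertible $\ell_i$; the ``up to a constant'' freedom is then used to place $\mathcal{T}(\mathbf{h}^*)$ at the optimal loss value. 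You instead add a label-independent scalar shift $s(\mathbf{h})$ to a near-optimal base predictor $\phi^*$ and invert the dataset-averaged implicit activation $\eta(s)=\tfrac{1}{N}\sum_i \ell(\phi^*(\mathbf{x}_i)+s,\mathbf{y}_i)$ -- in effect recycling Theorem~1's $\sigma$/Lemma~1 argument, recentred at a near-optimal predictor rather than a constant output. Each route buys something: the paper never has to analyse the averaged loss, whereas you must check non-degeneracy of the dataset-dependent $\eta$ loss-by-loss (clean for $\ell_2$, where the zero-mean-residual adjustment gives $\eta(s)=L^*+s^2$ exactly; for softmax cross-entropy note that a uniform shift of all logits is a no-op, so the scalar shift must act on a single logit). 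Conversely, your label-independent targets remain consistent when two training inputs coincide but carry different labels -- a case where the paper's targets $\ell_i^{-1}(\mathcal{T}(\mathbf{h}))\neq\ell_j^{-1}(\mathcal{T}(\mathbf{h}))$ cannot be realised simultaneously -- and the alignment of the pattern minimum with the near-global minimum comes out automatically as the additive constant $L^*$. Two small points to make explicit, both present in the paper as well: state that the retained first-layer embedding is injective on the training inputs (you need $\phi^*$ to factor through it), and note that $s(\mathbf{h})=\sqrt{\mathcal{T}(\mathbf{h})}$ is merely continuous, not smooth, at $\mathbf{h}^*$ -- continuity suffices for uniform approximation on the compact domain, and the paper's $\ell_i^{-1}$ composition has the identical wrinkle.
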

\begin{proof}

The construction of this argument is very similar to Theorem 1. We take ${\boldsymbol{\theta}}_\mathrm{b} = [h_1, ..., h_z, 0, ..., 0]$ and for ${\boldsymbol{\theta}}_\mathrm{W} \in \mathbb{R}^{d \times k}$ we set to zero the columns 1 through $z$ (so that $h_1, ... h_z$ are independent from $\mathbf{x}$), while the remaining columns are left to be randomly initialised. 

We also assume that this initial shattering is injective, so that we can forget about $\mathbf{x}$ and instead work with its $(k-z)$-dimensional embedding, $\phi(\mathbf{x})$, in the first hidden layer without losing our ability to represent a minimum of $L$. 
Then, in a similar fashion to the previous construction, the desired loss-surface section will be axis aligned and affecting $\mathbf{h}$ through control of biases of the first layer. 
Let us again consider this setup as a new neural network:
\begin{equation}
\begin{aligned}
q_{{\boldsymbol{\theta}}'}(\mathbf{h}, \phi(\mathbf{x})) :=& f_{\{{\boldsymbol{\theta}}_\mathrm{W},  \boldsymbol{\theta}_b, \boldsymbol{\theta}'\}}(\mathbf{x})\\
= &f_{\{{\boldsymbol{\theta}}_\mathrm{W}, [h_1, ..., h_z, 0, ..., 0], {\boldsymbol{\theta}}'\}}(\mathbf{x}).
\end{aligned}
\end{equation}%
We note that it is a universal approximator in its input space. Consequently, for any mixture of $h_i$ and an input sample, it can produce an output arbitrarily close to any target we specify.

The (approximate) global minimum of $L$ will be located in the point representing the minimum of the target loss pattern $\mathcal{T}$: let us call this point $\mathbf{h}^* := \arg\min_{\mathbf{h}\in[0,1]^z} \mathcal{T}(\mathbf{h})$.
Since all our loss-pattern-matching arguments hold up to a constant, we can assume that value of the loss in this minimum is equal to the minimum in the pattern $ L^* = \mathcal{T}(\mathbf{h}^*)$.

For simplicity of the argument, we assume that a loss $\ell_i(p) = \ell(p, y_i)$ is invertible and the inverse itself is smooth (which is a common property, which one can check using Inverse Function Theorem), and for losses like Euclidean distance, we simply restrict the inverse operation to a positive part only.

Under this assumption, let us define the targets for network $q$ to be
\begin{equation}
q_{\boldsymbol{\theta}'}(\mathbf{h}, \phi(\mathbf{x}_i)) \approx  \ell_i^{-1}( \mathcal{T}(\mathbf{h})),
\end{equation}
see Figure~\ref{fig:thm2} for visualisation of this construction.
Thanks to the assumptions of $\ell_i^{-1}$ smoothness, this target is also smooth as a composition of smooth functions,\footnote{Smoothness is required only for the general case of approximating in functional space, for a finite grid it is trivially satisfied as there always exists a smooth function going through these.} thus, as before, we can expect to be able to approximate it to arbitrary precision with a large enough network.

Let us now compute the total loss $L(\mathbf{h})$ of this construction, using $\epsilon^\mathbf{h}_i \in \mathbb{R}$ to denote the approximation error at sample $\mathbf{x}_i$ and $\mathbf{h}$:
\begin{equation}
\begin{aligned}
L(\mathbf{h}) &= \avg \ell( q_{\boldsymbol{\theta}'}(\mathbf{h}, \phi(\mathbf{x}_i)), \mathbf{y}_i) \\
&= \avg \ell_i( 
\underset{\mathbf{t}_i}{\underbrace{ \ell_i^{-1}(\mathcal{T}(\mathbf{h}))}} + \epsilon_i^\mathbf{h}
),
\end{aligned}
\end{equation}
and due to construction: 
\begin{equation}
\begin{aligned}
\avg \ell_i(\mathbf{t}_i) &= \avg \ell_i( \ell_i^{-1}(\mathcal{T}(\mathbf{h}))) \\
&= \avg (\ell_i \circ  \ell_i^{-1})(\mathcal{T}(\mathbf{h})) \\
&= \avg \mathcal{T}(\mathbf{h})\\
&= \mathcal{T}(\mathbf{h}).
\end{aligned}    
\end{equation}
\noindent Because each $\ell_i$ is Lipschitz, there exists a constant $c \in \mathbb{R}_+$ so that
\begin{equation}
    \begin{aligned}
&\int_{[0,1]^z} \| L(\mathbf{h}) - \mathcal{T}(\mathbf{h}) \| \;\mathrm{d}\mathbf{h} \\
=&\int_{[0,1]^z}  \left \| \avg \ell(\mathbf{t}_i + \epsilon_i^\mathbf{h}) - \avg \ell(\mathbf{t}_i)   \right \| \;\mathrm{d}\mathbf{h}\\
=&\int_{[0,1]^z}  \left \| \avg \left [\ell(\mathbf{t}_i + \epsilon_i^\mathbf{h}) -  \ell(\mathbf{t}_i)   \right ] \right \| \;\mathrm{d}\mathbf{h}\\
\leq &\int_{[0,1]^z}  \avg \left \| \ell(\mathbf{t}_i + \epsilon_i^\mathbf{h}) -  \ell(\mathbf{t}_i)   \right \| \;\mathrm{d}\mathbf{h}\\
\leq &\int_{[0,1]^z} c \cdot \max_i \|\epsilon_i^\mathbf{h}\| \;\mathrm{d}\mathbf{h} \leq c \cdot \epsilon,
    \end{aligned}
\end{equation}
where $\epsilon$ is the error coming from approximating our targets, guaranteed by the Universal Approximation Theorem to be arbitrarly small as the model's size grows. At the same time $$L(\mathbf{h}^*) \approx \mathcal{T}(\mathbf{h}^*) = L^*,$$
is an (approximate) realisation of the global minimum, which  concludes the proof.
\end{proof}

From the perspective of the above proof, Theorem 1 is nothing but a special case of this construction, where we pick $\phi(\cdot) = 0$, and lose the ability to represent $\mathbf{y}_j$'s beyond mean label prediction.

As a natural consequence of this relaxation, we lose the previous guarantees of transferring to the test set, but it is still reasonable to assume this property will hold -- given that this construction relies on the separation of the loss surface section channels from the input processing ones.
Also, this construction still does benefit from the learnability property, since it again relies solely on recasting the problem as regular supervised learning.

\section{Conclusions}

In this paper, we provided a proof of a somewhat surprising property first empirically observed by \citet{skorokhodov2019loss}: The loss surfaces of deep neural networks contain every low-dimensional pattern and
\begin{itemize}
    \item this property holds for any dataset,
    \item the pattern locations transfer from train to test set as well as to other datasets with the same loss and $P(\mathbf{y})$,
    \item finding such patterns is not harder than regular supervised learning,
    \item the patterns can be guaranteed to be axis aligned,
    \item the patterns can be modified to have loss value epsilon away from global minima of the original problem.
\end{itemize}

Whilst to the best of the authors' knowledge the results presented here do not directly lead to practical application, they do add to our overall understanding of neural network loss surfaces. 
In particular, our results can be seen as a cautionary note if considering adding regularisation in the form of local loss surface geometry preferences on low-dimensional sections, since networks might be able to \emph{cheat} and satisfy these constraints independently from their operation over input space.

\section*{Acknowledgements}
We would like to thank Balaji Lakshminarayanan and Tom Erez for helpful discussion and comments.

\bibliographystyle{abbrvnat}
\setlength{\bibsep}{5pt} 
\setlength{\bibhang}{0pt}
\bibliography{references}

\end{document}